\let\proof\@undefined
\let\endproof\@undefined
\newtheorem{theorem}{Theorem}[section]
\newtheorem{proposition}[theorem]{Proposition}
\newtheorem{definition}[theorem]{Definition}
\newtheorem{remark}[theorem]{Remark}
\newtheorem{problem}[theorem]{Problem}
\newcommand{\N}{\mathcal{N}}
\newcommand{\M}{\mathcal{M}}
\newcommand{\R}{\mathcal{R}}
\newcommand{\Prod}{\mathcal{P}}
\newcommand{\bq}{\textbf{q}}
\newcommand{\bo}{\textbf{o}}
\newcommand{\Land}{\wedge}
\newcommand{\Lor}{\vee}
\newcommand{\Next}{\mathsf{X}\, }
\newcommand{\Always}{\mathsf{G} \,}
\newcommand{\Event}{\mathsf{F} \,}
\newcommand{\Until}{\, \mathsf{U} \,}
\newcommand{\n}{\nonumber\\}
\newcommand{\st}{\,|\,}
\newcommand{\mc}[1]{\mathcal{#1}}
\newcommand{\be}{\begin{equation}}
\newcommand{\ee}{\end{equation}}
\newcommand{\ben}{\begin{equation*}}
\newcommand{\een}{\end{equation*}}
\newcommand{\bea}{\begin{eqnarray}}
\newcommand{\eea}{\end{eqnarray}}
\newcommand{\bean}{\begin{eqnarray*}}
\newcommand{\eean}{\end{eqnarray*}}
\newcommand{\ba}{\begin{array}}
\newcommand{\ea}{\end{array}}
\newcommand{\leftm}{\left[\begin{array}}
\newcommand{\rightm}{\end{array}\right]}
\newcommand{\ie}{{\it i.e., }}
\newcommand{\eg}{{\it e.g., }}
\newcommand\oprocendsymbol{\hbox{$\bullet$}}
\newcommand\oprocend{\relax\ifmmode\else\unskip\hfill\fi\oprocendsymbol}
\def\ba{{\mathbf a}}
\def\bb{{\mathbf b}}
\def\bh{{\mathbf h}}
\def\bq{{\mathbf q}}
\def\br{{\mathbf r}}
\def\bx{{\mathbf x}}  
\def\bz{{\mathbf z}}
\def\bA{{\mathbf A}}
\def\texitem#1{\par\smallskip\noindent\hangindent 25pt
               \hbox to 25pt {\hss #1 ~}\ignorespaces}
\newcommand{\btheta}{\boldsymbol{\theta}}
\newcommand{\bpsi}{\boldsymbol{\psi}}
\begin{document}

\title{Temporal Logic Motion Control using Actor-Critic Methods \\ -- Technical Report --~\authorrefmark{1}\thanks{* Research partially supported by the NSF under grant EFRI-0735974, by the DOE under grant DE-FG52-06NA27490, by the ODDR\&E MURI10 program under grant N00014-10-1-0952, and by ONR MURI under grant N00014-09-1051.}}

\author{
Xu Chu Ding\authorrefmark{2}, Jing Wang\authorrefmark{3}, Morteza Lahijanian\authorrefmark{3}, Ioannis Ch. Paschalidis\authorrefmark{3}, and Calin A. Belta\authorrefmark{3}
\thanks{$\dagger$ Xu Chu Ding is with Embedded Systems and Networks group, United Technologies Research Center, East Hartford, CT 06108 ({\tt dingx@utrc.utc.com}).}
\thanks{Jing Wang, Morteza Lahijanian, Ioannis Ch. Paschalidis, and Calin A. Belta
are with the Division of  System Eng., Dept. of Mechanical Eng., Dept. of Electrical \&  
Computer Eng., and Dept. of Mechanical Eng., Boston University, Boston, MA 02215 {\tt (\{wangjing,morteza, yannisp, cbelta\}@bu.edu}), respectively.}
}

\maketitle

\begin{abstract}
In this paper, we consider the problem of deploying a robot from a specification given as a temporal logic statement about some properties satisfied by the regions of a large, partitioned environment. We assume that the robot has noisy sensors and actuators and model its motion through the regions of the environment as a 
Markov Decision Process (MDP). The robot control problem becomes finding the control policy maximizing the probability of satisfying the temporal logic task 
on the MDP. For a large environment, obtaining transition probabilities for each state-action pair, as well as solving the necessary optimization problem for the optimal policy are usually not computationally feasible. To address these issues, we propose an approximate dynamic programming framework based on a least-square temporal difference learning method of the actor-critic type. This framework operates on sample paths of the robot and optimizes a randomized control policy with respect to a small set of parameters. The transition probabilities are obtained only when needed.   Hardware-in-the-loop simulations confirm that convergence of the parameters translates to an approximately optimal policy.  
%We consider the problem of finding a control policy for a Markov
%Decision Process (MDP) to maximize the probability of reaching some
%states while avoiding some other states.  This problem is motivated by
%applications in robotics, where such problems naturally arise when
%probabilistic models of robot motion are required to satisfy temporal
%logic task specifications.  We transform this problem into a Stochastic
%Shortest Path (SSP) problem and develop a new approximate dynamic
%programming algorithm to solve it. This algorithm is of the actor-critic
%type and uses a least-square temporal difference learning method. It
%operates on sample paths of the system and optimizes the policy within a
%pre-specified class parameterized by a parsimonious set of
%parameters. We show its convergence to a policy corresponding to a
%stationary point in the parameters' space.  Simulation results confirm
%the effectiveness of the proposed solution.
\end{abstract}

\begin{keywords}
Motion planning, Markov Decision Processes, dynamic programming, actor-critic methods.
\end{keywords}

{\allowdisplaybreaks
\section{Introduction}\label{sec:intro}
One major goal in robot motion planning and control is to specify a mission task in an expressive and high-level language and convert the task automatically to a control strategy for the robot.  The robot is subject to mechanical constraints, actuation and measurement noise, and limited communication and sensing capabilities. The challenge in this area is the development of a computationally efficient framework accommodating both the robot constraints and the uncertainty of the environment, while allowing for a large spectrum of task specifications. 

In recent years, temporal logics such as Linear Temporal Logic (LTL) and Computation Tree Logic (CTL) have been promoted as formal task specification languages for robotic applications \cite{Hadas-ICRA07,Karaman_mu_09,Loizou04,Quottrup04,Tok-Ufuk-Murray-CDC09,bhatia2010sampling}.  They are appealing 
due to their high expressivity and closeness to human language. 
Moreover, several existing formal verification 
\cite{Clarke99,baier2008principles} and synthesis \cite{baier2008principles}
tools can be adapted to generate motion plans and provably correct control strategies for the robots.

In this paper, we assume that the robot model in the environment is described by a (finite) Markov Decision Process (MDP).  In this model, the robot can precisely determine its current state, and by applying an action (corresponding to a motion primitive) enabled at each state, it triggers a transition to an adjacent state with a fixed probability. 
%MDPs are widely used a model for probabilistic systems in a variety of robotic applications, such as for ground robots \cite{LaWaAnBe-ICRA10}, unmanned aircrafts \cite{temizercollision} and surgical steering needles \cite{alterovitz2007stochastic}.  
We are interested in controlling the MDP robot model such that it maximizes the probability of satisfying a temporal logic formula over a set of properties satisfied at the states of the MDP. 
%satisfies a mission specification given as a temporal logic formula.  This problem can be seen as one of providing optimal \emph{probabilistic satisfaction guarantees}, \ie how to maximize the probability of achieving the task.  
By adapting existing probabilistic model checking  \cite{baier2008principles,de1997formal,vardi1999probabilistic} and synthesis \cite{courcoubetis1998markov,baier2004controller} algorithms, we recently developed such computational frameworks for formulas of LTL \cite{IFAC2011_LTL} and a fragment of probabilistic CTL \cite{LaWaAnBe-ICRA10}.  

With the above approaches, an optimal control policy can be generated to maximize the satisfying probability, given that the transition probabilities are known for each state-action pair of the MDP, which can be computed by using a Monte-Carlo method and repeated forward simulations. However, it is often not feasible for realistic robotic applications to obtain the transition probabilities for each state-action pair, even if an accurate model or a simulator of the robot in the environment is available. %The transition probabilities for each state-action pair can be obtained using a Monte-Carlo method and repeated forward simulations, but they are usually computationally expensive to obtain.  %Moreover, as suggested in \cite{LaWaAnBe-ICRA10,IFAC2011_LTL}, these problems usually involve MDPs with large state spaces (significantly larger than the number of regions in a partitioned environment) that capture 
Moreover, the problem size is even larger when considering temporal logic specifications. 
For example, in order to find an optimal policy for an MDP satisfying an LTL formula, one need to solve a dynamical programming problem on the product between the original MDP and a Rabin automaton representing the formula.  As such, exact solution can be computationally prohibitive for realistic settings.

In this paper, we show that approximate dynamic programming \cite{si2004handbook} can be effectively used to address the above limitations.  For large dynamic programming problems, an approximately optimal solution can be provided using actor-critic algorithms \cite{barto1983neuronlike}. In particular, actor-critic algorithms with Least Squares Temporal Difference (LSTD) learning have been shown recently to be a powerful tool to solve large-sized problems\cite{paschalidis2009actor, konda2004actor}.  This paper extends from \cite{estanjini2011least}, in which we proposed an actor-critic method for maximal reachability (MRP) problems, \ie maximizing the probability of reaching a set of states, to a computational framework that finds a control policy such that the probability of its paths satisfying an arbitrary LTL formula is locally optimal over a set of parameters.  This set of parameters is designed to tailor to this class of approximate dynamical programming problems.  

Our proposed algorithm produces a \emph{randomized policy}, which gives a probability distribution over enabled actions at a state.  
 %is based on sample paths, and thus only requires transition probabilities along the sampled paths and not over the entire state space.   
%Motivated by these findings, in this paper we present a computational framework to find an approximately optimal control strategy for a robot whose motion is modeled as a large MDP. We assume that specification is an arbitrary LTL formula over a set of properties satisfied by the regions of a partitioned environment. 
Our method requires transition probabilities to be generated only along sample paths, and is therefore particularly suitable for robotic applications. To the best of our knowledge, this is the first of combining temporal logic formal synthesis with actor-critic type methods. We illustrate the algorithms with hardware-in-the-loop simulations using an accurate simulator of our Robotic InDoor Environment (RIDE) platform \cite{RIDE-BU}.

%The rest of the paper is organized as follows.  We formulate the problem in Sec. \ref{sec:prob} and describe our approach.   We detail the control synthesis procedure in Sec. \ref{sec:control}.  We present the results with hardware-in-the-loop simulations in Sec. \ref{sec:Hardware} and we conclude the paper in Sec. \ref{sec:conclusions}.

\paragraph*{\bf Notation} We use bold letters to denote sequences and vectors. Vectors are assumed to be column vectors.  Transpose of a vector $\bx$ is denoted by $\bx^{\mathtt{T}}$.  $\|\cdot\|$ stands for the Euclidean norm.  $|S|$ denotes the cardinality of a set $S$.  

%Transpose is denoted by prime.  For any $m\times n$ matrix $\bA$, with rows $\ba_1,\ldots,\ba\in \mbb{R}^n$, $\bv(\bA)$ denotes the column vector $(\ba_1,\ldots,\ba)$. $\|\cdot\|$ stands for the Euclidean norm and $\|\cdot\|_{\btheta}$ is a special norm in the MDP state-action space that we will define later. $\bzero$ denotes a vector or matrix with all components set to zero and $\bI$ is the identity matrix. $|S|$ denotes the cardinality of a set $S$.

\section{Problem Formulation and Approach}
\label{sec:prob}
We consider a robot moving in an environment partitioned into regions such as the Robotic Indoor Environment (RIDE) (see Fig. \ref{fig:robotandsmallenv}).  Each region in the environment is associated with a set of observations.  Observations can be {\bf Un} for unsafe regions, or {\bf Up} for a region where the robot can upload data. We assume that the robot can detect its current region.  Moreover, the robot is programmed with a set of motion primitives allowing it to move from a region to an adjacent region.  To capture noise in actuation and sensing, we make the natural assumption that, at a given region, a motion primitive designed to take the robot to a specific adjacent region may take the robot to a different adjacent region.  

\begin{figure}
   \center
   \includegraphics[width=.45\textwidth]{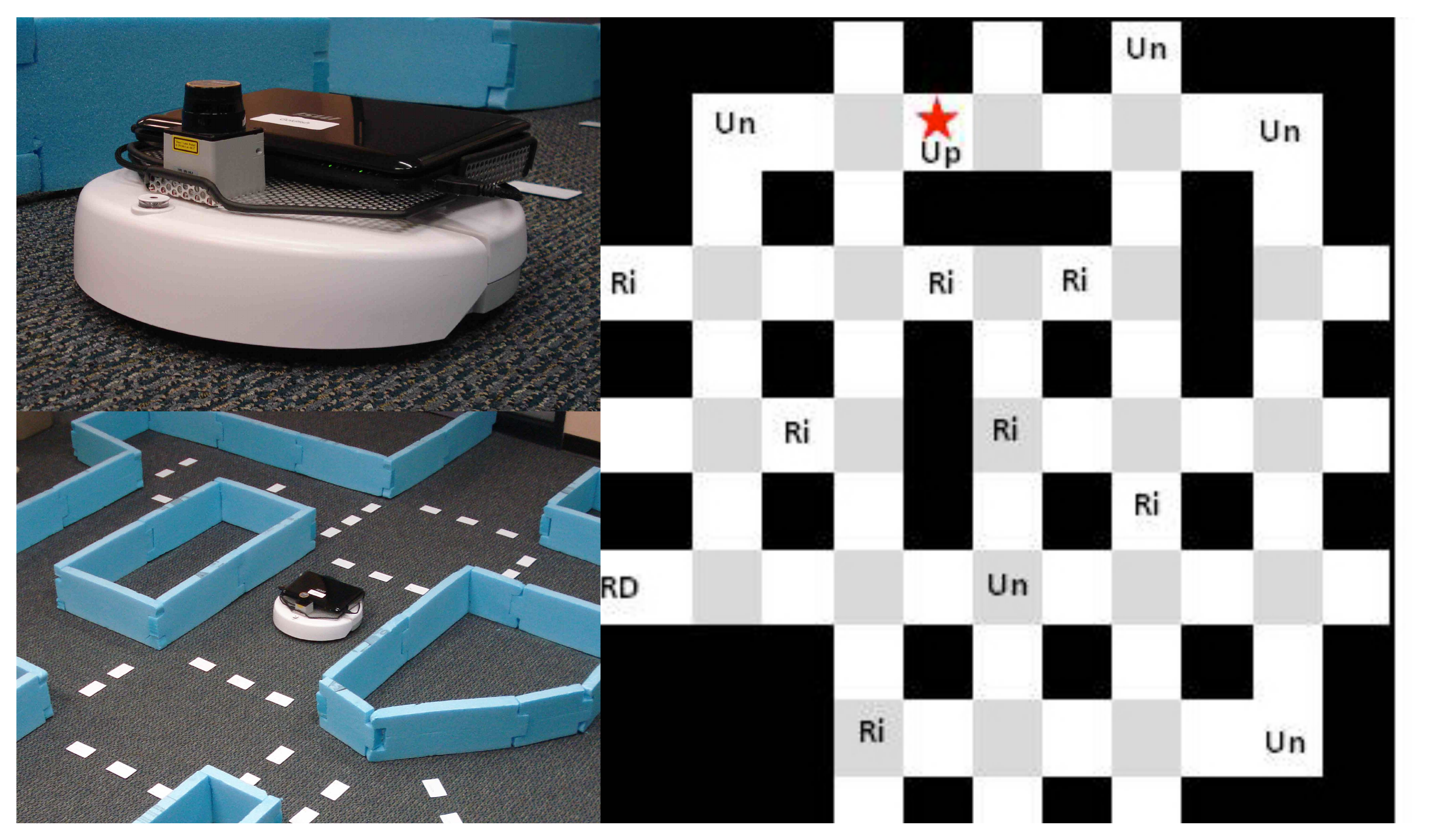} \\
   \caption{Robotic InDoor Environment (RIDE) platform. Left: An iCreate mobile platform moving autonomously through the corridors and intersections of an indoor-like environment. Right: The partial schematics of the environment.   The black blocks represent walls, and the grey and white regions are intersection and corridors, respectively.  The labels inside a region represents observations associated with regions, such as {\bf Un} (unsafe regions) and {\bf Ri} (risky regions).}
   \label{fig:robotandsmallenv}
\end{figure}

Such a robot model naturally leads to a labeled Markov Decision Process (MDP), which is defined below. 
\begin{definition}[Labeled Markov Decision Process]
\label{def:MDP}
A labeled Markov decision process (MDP) is a tuple $\mathcal M=(Q, q_{0}, U, A, P, \Pi, h)$, where 
\begin{enumerate}
 \item $Q=\{1,\ldots,n\}$ is a finite set of states;  
  \item $q_{0}\in Q$ is the initial state; 
 \item $U$ is a finite set of actions;
 \item $A: Q\to U$ maps a state $q\in Q$ to actions enabled at $q$;
 \item $P: Q\times U\times Q\rightarrow [0,1]$ is the transition probability function such that for all $q\in Q$,
$\sum_{q'\in Q}P(q,u,q') = 1$ if $u\in A(q)$, and $P(q,u,q')=0$ for all $q'\in Q$ if $u\notin A(q)$;
 \item $\Pi$ is a set of observations; 
 \item $h: Q\rightarrow 2^{\Pi}$ is the observation map.
\end{enumerate}
\end{definition}

Each state of the MDP $\M$ modeling the robot in the environment corresponds to an ordered set of regions in the environment, while the actions label the motion primitives that can be applied at a region.  For example, a state of $\M$ may be labelled as $I_{1}$-$C_{1}$, which means that the robot is currently at region $C_{1}$, coming from region $I_{1}$.  Each ordered set of regions corresponds to a recent history of the robot trajectory, and is needed to ensure the Markov property (more details on such MDP abstraction of the robot in the environment can be found in \eg \cite{LaWaAnBe-ICRA10}).  The transition probability function $P$ can be obtained through extensive simulations of the robot in the environment.  We assume that there exists an accurate simulator that is capable of generating (computing) the transition probability $P(q,u,\cdot)$ for each state-action pair $q\in Q$ and $u\in A(q)$.  More details of the construction of the MDP model for a robot in the RIDE platform are included in Sec. \ref{sec:Hardware}.  

%Note that, the calculation of the transition probability is generally computationally expensive, even for a single state-action pair.    
If the exact transition probabilities are not known,  $\M$ can be seen as a labeled non-deterministic transition system (NTS) $\mathcal M^{\N}=(Q, q_{0}, U, A, P^{\N}, \Pi, h)$, where $P$ in $\M$ is replaced by $P^{\N}:Q\times U\times Q\to\{0,1\}$, and $P^{\N}(q,u,q')=1$ indicates a possible transition from $q$ to $q'$ applying an enabled action $u\in A(q)$; if $P^\mathcal{N}(q,u,q')=0$, then the transition from $q$ to $q'$ is not possible under $u$.

%means that there is a positive probability that applying action $u$ at state $q$ results state transition to $q'$ (and zero probability otherwise). 

A path on $\M$ is a sequence of states $\bq=q_{0}q_{1}\ldots$ such that for all $k\geq 0$, there exists $u_{k}\in A(q_{k})$ such that $P(q_{k},u_{k},q_{k+1})>0$.  Along a path $\bq=q_{0}q_{1}\ldots$, $q_{k}$ is said to be the state at time $k$. The trajectory of the robot in the environment is represented by a path $\bq$ on $\M$ (which corresponds to a sequence of regions in the environment).  A path $\bq=q_{1}q_{2}\ldots$ generates a sequence of observations $\bh(\bq):=o_{1}o_{2}\ldots$, where $o_{k}=h(q_{k})$ for all $k\geq 0$.   We call $\bo=\bh(\bq)$ the word generated by $\bq$.  

\begin{definition}[Policy]
\label{def:policy}
A control policy for an MDP $\M$ is an infinite sequence $M=\mu_{0}\mu_{1}\ldots$, where $\mu_{k}:Q\times U\to[0,1]$ is such that $\sum_{u\in A(q)}\mu_{k}(q,u)=1$, for all $k\geq 0$.  
\end{definition}
Namely, at time $k$, $\mu_{k}(q,\cdot)$ is a discrete probability distribution over $A(q)$.  If $\mu=\mu_{k}$ for all $k\geq 0$, then $M=\mu\mu\ldots$ is called a \emph{stationary} policy.  If for all $k\geq 0$, $\mu_{k}(q,u)=1$ for some $u$, then $M$ is \emph{deterministic}; otherwise, $M$ is \emph{randomized}.  Given a policy $M$, we can then generate a set of paths on $\M$, by applying $u_{k}$ with probability $\mu_{k}(q_{k},u_{k})$ at state $q_{k}$ for all time $k$.  %Such a sequence is called a \emph{path} of $\M$ under policy $M$.   

%The construction of an MDP model for the motion of the robot is described in Sec. IV. Under the additional (and restrictive) assumption that, at any given time, the robot knows precisely its current region, generating a robot control strategy solving Problem 1 reduces to finding an MDP control strategy that maximizes the probability of satisfying a PCTL formula. This problem is treated in Sec. III. While the framework that we develop in this paper is quite general, we focus on RIDE (Sec. IV-A).

We require the trajectory of the robot in the environment to satisfy a rich task specification given as a Linear Temporal Logic (LTL) (see, \eg \cite{baier2008principles,Clarke99}) formula over a set of observations $\Pi$.  An LTL formula over $\Pi$ is evaluated over an (infinite) sequence $\textbf{o}=o_{0}o_{1}\ldots$ (\eg a word generated by a path on $\M$), where $o_{k}\subseteq \Pi$ for all $k\geq 0$.   We denote $\bo\vDash \phi$ if word $\bo$ satisfies the LTL formula $\phi$, and we say $\bq$ satisfies $\phi$ if $\bh(\bq)\vDash \phi$. Roughly, $\phi$ can be constructed from a set of observations $\Pi$, Boolean operators $\neg$ (negation), $\Lor$ (disjunction), $\Land$ (conjunction), $\longrightarrow$ (implication), and temporal operators $\Next$ (next), $\Until$ (until), $\Event$ (eventually), $\Always$ (always).   A variety of robotic tasks can be easily translated to LTL formulas.  For example, the following complex task command in natural language: \emph{``Gather data at locations {\bf Da} infinitely often.  Only reach a risky region {\bf Ri} if valuable data at {\bf VD} can be gathered, and always avoid unsafe regions ({\bf Un})''} can be translated to the LTL formula: 
$$\phi:=\Always \Event \textbf{Da} \Land \Always (\textbf{Ri}\longrightarrow \textbf{VD}) \Land \Always \neg \textbf{Un}.$$

%In this paper, we aim to obtain a control strategy for an MDP $\M$ modelling a robot in the environment such that the robot trajectory satisfies an LTL formula $\phi$.

In this paper, we consider the following problem.
\begin{problem}
 \label{prob:solvedprob}
 Given a labeled MDP $\mathcal M=(Q, q_{0}, U, A, P, \Pi, h)$ modeling the motion of a robot in a partitioned environment and a mission task specified as an LTL formula $\phi$ over $\Pi$, find a control policy that maximizes the probability of its path satisfying $\phi$. 
 %policy $M$ maximizing the probability of satisfying $\phi$ ($\mathrm{Pr}^{M}_{\M}(\phi)$).
\end{problem}
The probability that paths generated under a policy $M$ satisfy an LTL formula $\phi$ is well defined with a suitable measure over the set of all paths generated by $M$ \cite{baier2008principles}.  

In \cite{IFAC2011_LTL}, we proposed a computational framework to solve Prob. \ref{prob:solvedprob}, by adapting methods from the area of probabilistic model checking \cite{de1997formal,baier2008principles,vardi1999probabilistic}.  However, this framework relies upon the fact that the transition probabilities are known for all state-action pairs.   These transition probabilities are typically not available for robotic applications and computationally expensive to compute.  Moreover, even if the transition probabilities are obtained for each state-action pair, this method still requires solving a linear program on the product of the MDP and the automata representing the formula, which can be very large (thousands or even millions of states).  In this case an approximate method might be more desirable. For these reasons, we instead focus on the following problem.

\begin{problem}
 \label{prob:mainprob}
 Given a labeled NTS $\mathcal M^{\N}=(Q, q_{0}, U, A, P^{\N}, \Pi, h)$ modeling a robot in a partitioned environment, a mission task specified as an LTL formula $\phi$ over $\Pi$, and an accurate simulator to compute transition probabilities $P(q,u,\cdot)$ given a state-action pair $(q,u)$, find a control policy that approximately maximizes the probability of its path satisfying $\phi$. 
 %policy $M$ maximizing the probability of satisfying $\phi$ ($\mathrm{Pr}^{M}_{\M}(\phi)$).
\end{problem}

In many robotic applications, the NTS model $\mathcal M^{\N}=(Q, q_{0}, U, A, P^{\N}, \Pi, h)$ can be quickly constructed for the robot in the environment.   Our approach to Prob. \ref{prob:mainprob} can be summarized as follows: First, we proceed to translate the problem to a maximal reachability probability (MRP) problem using $\M^{\N}$ and $\phi$ (Sec. \ref{sec:sec:genMRP}).  We then use an actor critic framework to find a randomized policy giving an approximate solution to the MRP problem (Sec. \ref{sec:sec:A-C}).  The randomized policy is constructed to be a function of a small set of parameters and we find a policy that is locally optimal with respect to these parameters.  The construction of a class of policies suitable for MRP problems without using the transition probabilities is explained in Sec. \ref{sec:sec:RSPdesign}.  The algorithmic framework presented in this paper is summarized in Sec. \ref{sec:sec:overallAlg}.

\section{Control Synthesis}
\label{sec:control}

\subsection{Formulation of the MRP Problem}
\label{sec:sec:genMRP}
The formulation of the MRP problem is based on \cite{IFAC2011_LTL,de1997formal,baier2008principles,vardi1999probabilistic} with modification if needed when using the NTS $\M_{\N}$ instead of $\M$.   We start by converting the LTL formula $\phi$ over $\Pi$ to a so-called deterministic \emph{Rabin automaton}, which is defined as follows.
%We say a run $r$ satisfies $\phi$ if and only if the word generated by $r$ satisfies $\phi$.  Atomic propositions are properties that can be either true or false at each state of the transition system.  At state $q$, $\pi \in \mathcal L(q)$ if and only if $\pi$ is true at state $t$.  We call $t_{2}$ a successor state of $t_{1}$ if $t_{2}\in\delta(t_{1})$. A {\em trajectory} or {\em run} of the transition system is an infinite sequence $r=t_{0}t_{1}\ldots $ where $t_{i+1}\in\delta(t_{i})$ for all $i\geq 0$.  A run $r$ generates a word $o=o_{0}o_{1}\ldots $ where $o_{i}=\mathcal L(t_{i})$. 
\begin{definition}[Deterministic Rabin Automaton]
\label{def:DRA}
A deterministic Rabin automaton (DRA) is a tuple $\mathcal R=(S,s_{0},\Sigma,\delta,F)$, where
\begin{enumerate}
 \item $S$ is a finite set of states;
  \item $s_{0}\in S$ is the initial state;
 \item $\Sigma$ is a set of inputs (alphabet);
 \item $\delta:S\times\Sigma\rightarrow S$ is the transition function;
 \item $F=\{(L(1),K(1)),\dots,(L(M),K(M))\}$ is a set of pairs of sets of states such that $L(i),K(i)\subseteq S$ for all $i=1,\dots,M$.
\end{enumerate}
\end{definition}
A run of a Rabin automaton $\mathcal R$, denoted by $\br=s_{0}s_{1}\ldots$, is an infinite sequence of states in $\mathcal R$ such that for each $k\geq 0$, $s_{k+1}\in\delta(s_{k},\alpha)$ for some $\alpha \in \Sigma$.   A run $\br$ is {\it accepting} if there exists a pair $(L,K)\in F$ such that $\br$ intersects with $L$ finitely many times and $K$ infinitely many times.  For any LTL formula $\phi$ over $\Pi$, one can construct a DRA (for which we denote by $\R_{\phi}$) with input alphabet $\Sigma= 2^{\Pi}$ accepting all and only words over $\Pi$ that satisfy $\phi$ (see \cite{gradel2002automata}).

%To formalize this control objective, we first denote the set of all paths of $\M$ under $M$ as $\mathrm{Paths}^{M}_{\mathcal M}$.   The set $\mathrm{Paths}^{M}_{\mathcal M}$ induces a probability measure $\mathrm{Pr}_{\M}^{M}$ such that, naturally, $\mathrm{Pr}_{\M}^{M}\{\mathrm{Paths}^{M}_{\mathcal M}\}=1$.  We then define
%\be
%\label{eq:probofformula}
%\mathrm{Pr}^{M}_{\M}(\phi):=\mathrm{Pr}^{M}_{\M}\{\bq \in \mathrm{Paths}^{M}_{\mathcal M} : \bh(\bq) \vDash \phi\},
%\ee 
%as the probability measure of paths under $M$ satisfying $\phi$.  The formal definition of this probability measure $\mathrm{Pr}$ can be found in \eg \cite{baier2008principles}. The main problem that we consider in this paper can then be formulated as the following.

We then obtain an MDP as the product of a labeled MDP $\mathcal M$ and a DRA $\mathcal R_{\phi}$, which captures all paths of $\mathcal M$ satisfying $\phi$.  %This product MDP allows one to capture paths of $\mathcal M$ that generate words that are accepted by $\mathcal R_{\phi}$.  It is defined as follows:
Note that this product MDP can only be constructed from an MDP and a deterministic automaton, this is why we require a DRA instead of, \eg a (generally non-deterministic) B\"uchi automaton (see \cite{baier2008principles}).
\begin{definition}[Product MDP]
\label{def:product}
The product MDP $\mathcal M \times \mathcal R_{\phi}$ between a labeled MDP $\mathcal M=(Q, q_{0}, U, A, P, \Pi, h)$ and a DRA $\mathcal R_{\phi}=(S,s_{0},2^{\Pi},\delta,F)$ is an MDP $\mathcal P=(S_{\mathcal P}, s_{\mathcal P0}, U_{\mathcal P}, A_{\mathcal P}, P_{\mathcal P}, \Pi, h_{\mc P})$, where 
\begin{enumerate}
 \item $S_{\mathcal P}= Q\times S$ is a set of states;
 \item $s_{\mathcal P0}=(q_{0},s_{0})$ is the initial state; 
 \item $U_{\mathcal P}= U$ is a set of actions inherited from $\mathcal M$;
 \item $A_{\mathcal P}$ is also inherited from $\M$ and $A_{\mc P}((q,s)):=A(q)$; 
 \item $P_{\mathcal P}$ gives the transition probabilities: 
\ben
P_{\mathcal P}((q,s),u,(q',s'))\! =\!\begin{cases} P(q,u,q') & \textrm{if } q'=\delta(s,h(q)) \\ 0 & \rm otherwise; \end{cases}
\een
 %is the set of pairs of states $(L^{\mathcal P}_{i}, K^{\mathcal P}_{i})$ obtained as follows: for each pair $(L_{i},K_{i})\in F$, $(s,q)\in L^{\mathcal P}_{i}$ if $q\in L_{i}$, and $(s,q)\in K^{\mathcal P}_{i}$ if $q\in K_{i}$;
% \item A path on $\mathcal P$ is accepting, if there exists $(L_{\mathcal P}, K_{\mathcal P})\in F_{\mathcal P}$ such that the set $K_{\mathcal P}$ is visited infinitely often and $L_{\mathcal P}$ finitely often.
\end{enumerate}
\end{definition}
Note that $h_{\mc P}$ is not used in the product MDP.  Moreover, $\mc P$ is associated with pairs of accepting states (similar to a DRA)  $F_{\mathcal P}:=\{(L_{\mathcal P}(1), K_{\mathcal P}(1)),\ldots,(L_{\mathcal P}(M), K_{\mathcal P}(M))\}$ where $L_{\mathcal P}(i)=Q\times L(i)$, $K_{\mathcal P}(i)=Q\times K(i)$, for $i=1,\ldots,M$; 

The product MDP is constructed in a ways such that, given a path $(s_{0},q_{0})(s_{1},q_{1})\ldots$, the corresponding path $s_{0}s_{1}\ldots$ on $\mathcal M$ satisfies $\phi$ if and only if there exists a pair $(L_{\mathcal P}, K_{\mathcal P})\in F_{\mathcal P}$ satisfying the Rabin acceptance condition, \ie the set $K_{\Prod}$ is visited infinitely often and the set $L_{\Prod}$ is visited finitely often.

%\begin{figure*}[!ht]
%   \centering
%   \subfloat[]{\includegraphics[scale=0.35]{MDPsmallwithpi}}
%   \subfloat[]{\includegraphics[scale=0.35]{DRA}}\\
%   \subfloat[]{\includegraphics[scale=0.35]{ProductMDPwithpi}}
%   \subfloat[]{\includegraphics[scale=0.35]{AMEC}}
%\caption{The construction of the product between an NTS $\M^{\N}$ and a DRA.  In this example, the set of atomic proposition is $\{a,b\}$.  {\bf (a)}: A labeled NTS where the label on top of a state denotes the observations associated with the state.  An arrow from a state $q$ to $q'$ indicates a possible transition from $q'$ under a control $u\in A(q)$ (\ie $P^{\N}(q,u,q')=1$).   {\bf (b)}: The DRA $\mathcal R_{\phi}$ corresponding to the LTL formula $\phi=\Always\Event \pi \Land \Always\Event a$.  In this example, there is one set of accepting states $F=\{(L,K)\}$ where $L=\emptyset$ and $K=\{s_{2},s_{3}\}$ (marked by double-strokes).   Thus, accepting runs of this DRA must visit $s_{2}$ or $s_{3}$ (or both) infinitely often.  {\bf (c)}: The product $\mathcal P^{\N}=\mathcal M^{\N}\times\mathcal R_{\phi}$ where states of $K_{\mathcal P}$ are marked by double-strokes.   The states with dashed borders are unreachable, and they can be removed from $S_{\mathcal P}$. {\bf (d)}: The states of the sole AMEC corresponding to the product $\Prod^{\N}$.}
%%In this example, the functions $g$ and $g_{\mathcal P}$ are not shown. 
%\label{fig:MDP-DRA-Product}
%\end{figure*}

We can make a very similar product between a labeled NTS $\M^{\N}=(Q, q_{0}, U, A, P^{\N}, \Pi, h)$ and $\R_{\phi}$.  This product is also an NTS, which we denote by $\Prod^{\N}=(S_{\mathcal P}, s_{\mathcal P0}, U_{\mathcal P}, A_{\mathcal P}, P^{\N}_{\mathcal P}, \Pi, h_{\mc P}):=\M^{\N}\times \R_{\phi}$, associated with accepting sets $F_{\mc P}$.   The definition (and the accepting condition) of $\Prod^{\N}$ is exactly the same as for the product MDP.   The only difference between $\Prod^{\N}$ and $\Prod$ is in $P^{\N}_{\mathcal P}$, which is either $0$ or $1$ for every state-action-state tuple. 

From the product $\Prod$ or equivalently $\Prod^{\N}$, we can proceed to construct the MRP problem.  To do so, it is necessary to produce the so-called \emph{accepting maximum end components} (AMECs).  An end component is a subset of an MDP (consisting of a subset of states and a subset of enabled actions at each state) such that for each pair of states $(i,j)$ in $\Prod$, there is a sequence of actions such that $i$ can be reached from $j$ with positive probability, and states outside the component cannot be reached.  %End components are exact analog of strongly connected components for directed graphs. 
An AMEC of $\Prod$ is the largest end component containing at least one state in $K_{\Prod}$ and no state in $L_{\Prod}$, for a pair $(K_{\Prod}, L_{\Prod})\in F_{\Prod}$.

A procedure to obtain all AMECs of an MDP is outlined in \cite{baier2008principles}.   This procedure is intended to be used for the product MDP $\Prod$, but it can be used without modification to find all AMECs associated with $\Prod$ when $\Prod^{\N}$ is used instead of $\Prod$.  This is because the information needed to construct the AMECs is the set of all possible state transitions at each state, and this information is already contained in $\Prod^{\N}$. %An example of a product between an NTS $\M^{\N}$ and a DRA corresponding to the LTL formula $\phi=\Always\Event \pi \Land \Always\Event a$ is shown in Fig. \ref{fig:MDP-DRA-Product}, as well as the AMEC associated with the product.  

If we denote $S_{\Prod}^{\star}$ as the union of all states in all AMECs associated with $\Prod$, it has been shown in probabilistic model checking (see \eg \cite{baier2008principles}) that the probability of satisfying the LTL formula is given by the maximal probability of reaching the set $S_{\Prod}^{\star}$ from the initial state $S_{\Prod0}$.   The desired optimal policy can then be obtained as the policy maximizing this probability.   If transition probabilities are available for each state-action pair, then the solution to this MRP problem can be solved as by a linear program (see \cite{puterman1994markov,baier2008principles}).  The resultant optimal policy is deterministic and (\ie $M=\mu\mu\ldots$) on the product MDP $\Prod$.  To implement this policy on $\M$, it is necessary to use the DRA as a feedback automaton to keep track of the current state $s_{\Prod}$ on $\Prod$, and apply the action $u$ where $\mu(s_{\Prod},u)=1$ (since $\mu$ is deterministic).

\begin{remark}
 It is only necessary to find the optimal policy for states not in the set $S_{\Prod}^{\star}$.   This is because by construction, there exists a policy inside any AMEC that almost surely satisfies the LTL formula $\phi$ by reaching a state in $K_{\Prod}$ infinitely often.  This policy can be obtained by simply choosing an action (among the subset of actions retained by the AMEC) at each state randomly, \ie a trivial randomized stationary policy exists that almost surely satisfies $\phi$.
\end{remark}

\subsection{LSTD Actor-Critic Method}
\label{sec:sec:A-C}
We now describe how relevant results in \cite{estanjini2011least} can be applied to solve Prob. \ref{prob:mainprob}.  An approximate dynamic programming algorithm of the actor-critic type was presented in \cite{estanjini2011least}, which obtains a stationary randomized policy (RSP) (see Def. \ref{def:policy}) $M=\mu_{\btheta}\mu_{\btheta}\ldots$, where $\mu_{\btheta}(q,u)$ is a function of the state-action pair $(q,u)$ and $\btheta\in\mathbb R^{n}$, which is a vector of parameters.  For the convenience of notations, we denote an RSP $\mu_{\btheta}\mu_{\btheta}\ldots$ simply by $\mu_{\btheta}$.   In this sub-section we assume that the RSP $\mu_{\btheta}(q,u)$ to be given, and we will describe in Sec. \ref{sec:sec:RSPdesign} on how to design a suitable RSP.

Given an RSP $\mu_{\btheta}$, actor-Critic algorithms can be applied to optimize the parameter vector $\btheta$ by policy gradient estimations. The basic idea is to use stochastic learning techniques to find $\btheta$ that locally optimizes a cost function.  In particular, the algorithm presented in \cite{estanjini2011least} is targeted at Stochastic Shortest Path (SSP) problems commonly studied in literature (see \eg \cite{puterman1994markov}).  Given an MDP $\mathcal M=(Q, q_{0},U, A, P,  \Pi, h)$, a termination state $q^{\star}\in Q$ and a function $g(q,u)$ defining the one-step cost of applying action $u$ at state $q$, the {\em expected total cost} is defined as:
\begin{equation}
\label{eq:SSPcost}
	\bar{\alpha}(\btheta) = \lim_{N\to\infty} E\left\{\sum_{k=0}^{N-1} g(q_k,u_k)\right\},
\end{equation}
where $(q_{k}, u_{k})$ is the state-action pair at time $k$ along a path under RSP $\mu_{\btheta}$.

The SSP problem is formulated as the problem of finding $\btheta^{\star}$ minimizing \eqref{eq:SSPcost}. 
Note that, in general, we assume $q^{\star}$ to be cost-free and absorbing, \ie $g(q^{\star},u)=0$ and $P(q^{\star}, u,q^{\star})=1$ for all $u\in A(q^{\star})$.  Under these conditions, the expected total cost \eqref{eq:SSPcost} is finite. 

We note that an MRP problem as described in Sec. \ref{sec:sec:genMRP} can be immediately converted to an SSP problem.    

\begin{definition}[Conversion from MRP to SSP]
\label{def:conversion}
Given the product MDP $\Prod=(S_{\mathcal P},s_{\mathcal P0}, U_{\mathcal P}, A_{\mathcal P}, P_{\mathcal P}, F_{\mathcal P})$ and a set of states $S^{\star}_{\Prod}\subseteq S_{\Prod}$, the problem of maximizing the probability of reaching $S^{\star}_{\Prod}$ can be converted to an SSP problem by defining a new MDP $\widetilde \Prod=(\widetilde S_{\mathcal P},\tilde s_{\mathcal P0}, \widetilde U_{\mathcal P}, \widetilde A_{\mathcal P}, \widetilde P_{\mathcal P},g_{\Prod})$, where
\begin{enumerate}
 \item $\widetilde S_{\Prod}=(S_{\Prod}\setminus S^{\star}_{\Prod}) \cup \{s_{\Prod}^{\star}\}$, where $s_{\Prod}^{\star}$ is a ``dummy'' terminal state;
 \item $\tilde s_{\mathcal P0}=s_{\mathcal P0}$ (without the loss of generality, we exclude the trivial case where $s_{\mathcal P0}\in S^{\star}_{\Prod}$);
 \item $\widetilde U_{\Prod}=U_{\Prod}$;
 \item $\widetilde A_{\mathcal P}(s_{\Prod})=A_{\mathcal P}(s_{\Prod})$ for all $s_{\Prod}\in S_{\Prod}$, and for the dummy state we set $\widetilde A_{\mathcal P}(s_{\Prod}^{\star})=\widetilde U_{\mathcal P}$;
 \item The transition probability is redefined as follows.  We first define $\bar S_{\Prod}^{\star}$ as the set of states on $\Prod$ that cannot reach $S_{\Prod}^{\star}$ under any policy.  We then define:
\bean
\label{E:tranprob}
&&\widetilde P_{\Prod}(s_{\Prod},u,s_{\Prod}')\nonumber\\&=&\left\{\begin{array}{ll}
\displaystyle\sum_{s''_{\Prod}\in S^{\star}_{\Prod}} P_{\Prod}(s_{\Prod}, u, s''_{\Prod}), &\text{if } s_{\Prod}'=s_{\Prod}^{\star}\\
P_{\Prod}(s_{\Prod}, u, s'_{\Prod}), &\text{if } s'_{\Prod}\in S_{\Prod}\setminus S^{\star}_{\Prod}\end{array}\right.
\eean
for all $s_{\Prod}\in S_{\Prod}\setminus (S^{\star}_{\Prod} \cup \bar S_{\Prod}^{\star})$ and $u\in \widetilde U_{\Prod}$.  Moreover, for all $s_{\Prod}\in \bar S_{\Prod}^{\star}$ and $u\in \widetilde U_{\Prod}$, we set $\widetilde P_{\Prod}(s^{\star}_{\Prod},u,s^{\star}_{\Prod})=1$ and $\widetilde P_{\Prod}(s_{\Prod},u,s_{\Prod0})=1$;
\item For all $s_{\Prod}\in \widetilde S_{\Prod}$ and $u\in \widetilde U_{\Prod}$, we define the one-step cost $g_{\Prod}(s_{\Prod},u)=1$ if $s_{\Prod}\in \bar S^{\star}_{\Prod}$, and $g(s_{\Prod},u)=0$ otherwise. 
\end{enumerate} 
\end{definition}

We have shown in  \cite{estanjini2011least} that the policy minimizing \eqref{eq:SSPcost} for the SSP problem with MDP $\widetilde \Prod$ and the termination state $s_{\Prod}^{\star}$ is a policy maximizing the probability of reaching the set $S^{\star}_{\Prod}$ on $\Prod$, \ie a solution to the MRP problem formulated in Sec. \ref{sec:sec:genMRP}.  

The SSP problem can also be constructed from the NTS $\Prod^{\N}$.  In this case we obtain an NTS $\widetilde \Prod^{\N}(\widetilde S_{\mathcal P},\tilde s_{\mathcal P0}, \widetilde U_{\mathcal P}, \widetilde A_{\mathcal P}, \widetilde P^{\N}_{\mathcal P},g_{\Prod})$, using the exact same construction as Def. \ref{def:conversion}, except for the definition of $\widetilde P^{\N}_{\mathcal P}$.  The transition function $\widetilde P^{\N}_{\mathcal P}(s_{\Prod},u,s_{\Prod}')$ is instead defined as:
\bean
\label{E:tranprob}
&&\widetilde P^{\N}_{\Prod}(s_{\Prod},u,s_{\Prod}')\nonumber\\&=&\left\{\begin{array}{ll}
\underset{s''_{\Prod}\in S^{\star}_{\Prod}}{\max} P^{\N}_{\Prod}(s_{\Prod}, u, s''_{\Prod}), &\text{if } s_{\Prod}'=s_{\Prod}^{\star}\\
P^{\N}_{\Prod}(s_{\Prod}, u, s'_{\Prod}), &\text{if } s'_{\Prod}\in S_{\Prod}\setminus S^{\star}_{\Prod}\end{array}\right.
\eean
for all $s_{\Prod}\in S_{\Prod}\setminus (S^{\star}_{\Prod} \cup \bar S_{\Prod}^{\star})$ and $u\in \widetilde U_{\Prod}$.  Moreover, for all $s_{\Prod}\in \bar S_{\Prod}^{\star}$ and $u\in \widetilde U_{\Prod}$, we set $\widetilde P^{\N}_{\Prod}(s^{\star}_{\Prod},u,s^{\star}_{\Prod})=1$ and $\widetilde P^{\N}_{\Prod}(s_{\Prod},u,s_{\Prod0})=1$.

Once the SSP problem is constructed, the algorithm presented in \cite{estanjini2011least} is an iterative procedure that obtains a policy that locally minimizes the cost function \eqref{eq:SSPcost} by simulating sample paths on $\widetilde \Prod$.  Each sample paths on $\widetilde \Prod$ starts at $s_{\Prod 0}$ and ends when the termination state $s_{\Prod}^{\star}$ is reached.  Since the probabilities is needed only along the sample path, we do not require the MDP $\widetilde \Prod$, but only $\widetilde \Prod^{\N}$.

An actor-critic algorithm operates in the following way: the critic observes state and one-step cost from MDP and uses observed information to update the critic parameters, then the critic parameters are
used to update the policy;  the actor generates the action based on the policy and applies the action to the MDP.  The algorithm stops when the gradient of $\bar\alpha(\btheta)$ is small enough (\ie $\btheta$ is locally optimal). The actor-critic update mechanism is shown in Fig. \ref{actor-critic}. 

\begin{figure}[t]
	\begin{center}
		\includegraphics[scale=.32]{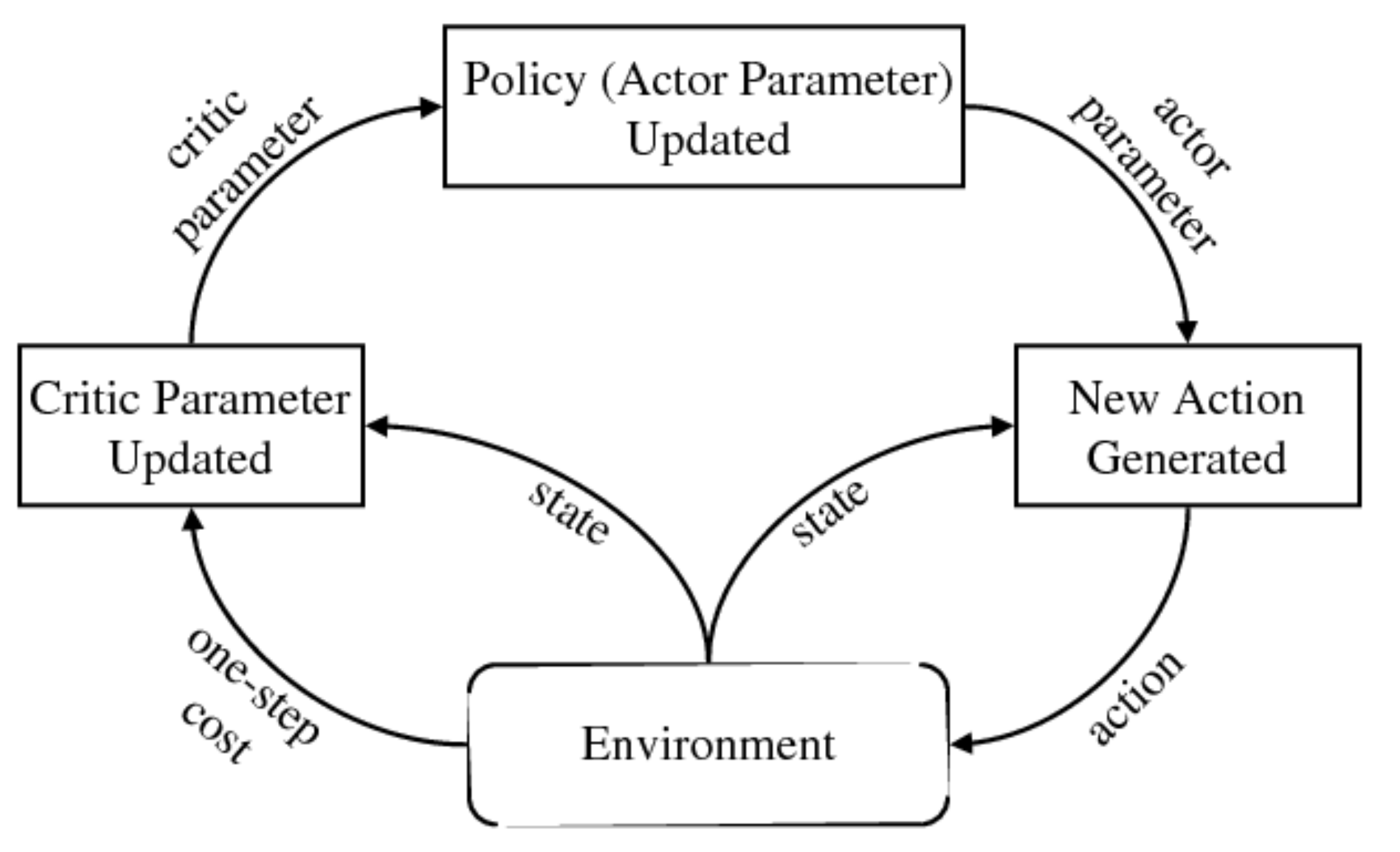}
	\end{center}
	\caption{Diagram illustrating an actor-critic algorithm. }
	\label{actor-critic}
\end{figure}

We summarize the actor-critic update algorithm in Alg. \ref{alg:A-C}, and we note that it does not depend on the form of RSP $\mu_{\btheta}$. The vectors  $\bz_k \in \mathbb{R}^n$,$\bb_k\in \mathbb{R}^n$, $\br_k\in \mathbb{R}^n$ and the matrix $\bA_k\in \mathbb R^{n\times n}$ are updated during each critic update, while simultaneously, the vector $\btheta_{k}\in \mathbb{R}^n$ is updated during each actor update.   Both the critic and actor update depend on 
\be
\label{eq:psitheta}
\bpsi_{\btheta}(x,u):=\nabla_{\btheta} \ln(\mu_{\btheta}(x,u)),
\ee
which is the gradient of the logarithm of $\mu_{\btheta}(x,u)$, to estimate the gradient  $\nabla\bar{\alpha}(\btheta)$.  Lastly, sequence $\{\gamma_k\}$ controls the critic step-size, while $\{\beta_k\}$ and $\Gamma(\br_{k})$ control the actor step-size.  We note that all step-size parameters are positive, and their effect on the convergence rate is discussed in \cite{estanjini2011least}.

The critic update algorithm in Alg. \ref{alg:A-C} is of the LSTD type, which has shown to be superior to other approximate dynamic programming methods in terms of the convergence rate \cite{konda2004actor}.  More detail of this algorithm can be found in \cite{estanjini2011least}.

\begin{algorithm}[ht]
\small
\caption{LSTD Actor-critic algorithm for SSP problems}
\begin{algorithmic}[1]\label{alg:A-C}
\REQUIRE The NTS $\widetilde \Prod^{\N}(\widetilde S_{\mathcal P},\tilde s_{\mathcal P0}, \widetilde U_{\mathcal P}, \widetilde A_{\mathcal P}, \widetilde P^{\N}_{\mathcal P},g_{\Prod})$ with the terminal state $s_{\Prod}^{\star}$, the RSP $\mu_{\btheta}$, and a computation tool to obtain $\widetilde P_{\Prod}(s_{\Prod},u,\cdot)$ for a given $(s_{\Prod},u)$ state-action pair.%
\STATE   \noindent{\bf Initialization:} Set all entries in $\bz_0, \bA_0, \bb_0$ and $\br_0$ to zeros. Let $\btheta_0$ 
take some initial value.  Set initial state $x_{0}:=\tilde s_{\Prod0}$.  Obtain action $u_{0}$ using the RSP $\mu_{\btheta_{0}}$.

\REPEAT 
%\STATE At current state $x_{k}\in \widetilde S_{\Prod}$:
\STATE Compute the transition probabilities $\widetilde P(x_{k},u_{k},\cdot)$.  
\STATE Obtain the simulated subsequent state $x_{k+1}$ using the transition probabilities $\widetilde P(x_{k},u_{k},\cdot)$.  If $x_{k}= s^{\star}_{\Prod}$,  set $x_{k+1}:=x_{0}$.%
\STATE Obtain action $u_{k+1}$ using the RSP $\mu_{\btheta_{k}}$
\STATE \noindent{\bf Critic Update:}
\bean
 \bz_{k+1} &=& \lambda \bz_k+\bpsi_{\btheta_k}(x_k,u_k)\n
 \bb_{k+1} &=& \bb_k+\displaystyle \gamma_k \left(g(x_k,u_k) \bz_{k}-\bb_k\right)\n
 \bA_{k+1} &=& \bA_k + \displaystyle \gamma_k (\bz_{k}(\bpsi_{\theta_k}^{\mathtt{T}}(x_{k+1},u_{k+1})-\bpsi_{\theta_k}^{\mathtt{T}}(x_k,u_k))\n
 && -\bA_k),\n
 \br_{k+1} &=& -\bA_k^{-1}\bb_k.
\eean
\vskip-.5em
\STATE \noindent{\bf Actor Update:}
\ben
\label{Eq:actor}
	\btheta_{k+1}=\btheta_k-\beta_k\Gamma(\br_k)\br_{k}^{\mathtt{T}}\bpsi_{\theta_k}(x_{k+1},u_{k+1})\bpsi_{\theta_k}(x_{k+1},u_{k+1})
\een
\UNTIL{$||\nabla\bar{\alpha}(\btheta_k)||\leq \epsilon$ for some given $\epsilon$}
\end{algorithmic}
\end{algorithm} 

\subsection{Designing an RSP}
\label{sec:sec:RSPdesign}
In this section we describe a randomized policy suitable to be used in Alg. \ref{alg:A-C} for MRP problems, and do not require the transition probabilities.  We propose a family of RSPs that perform a ``$t$ steps look-ahead''.  This class of policies consider all possible sequences of actions in $t$ steps and obtain a probability for each action sequence.%, but only the first action in the sequence is implemented.

To simplify notation, for a pair of states $i,j\in \widetilde S_\Prod$, we denote $i\overset{t}{\to} j$ if there is a positive probability of reaching $j$ from $i$ in $t$ step.  This can be quickly verified given $\widetilde P^{\N}_{\Prod}$ without transition probabilities.   At state $i\in \widetilde S_{\Prod}$, we denote an action sequence from $i$ with $t$ steps look-ahead as $e=u_{1} u_{2} \dots u_{t}$, where $u_{k}
\in \widetilde A_{\Prod}(j)$ for some $j$ such that $i\overset{k}{\to} j$, for all $k=1,\dots t$. 
%Applying the action sequence $e$ means $u_m^1$ will be applied immediately, $u^2_m$ will be applied in the next time-step, and so on.
%We denote $\widetilde P_{\Prod}\left(i, e, j \right)$ as the probability of reaching $j\in \widetilde S_{\Prod}$ from state $i\in \widetilde S_{\Prod}$ under the action sequence $e$.  This probability can be computed recursively using $\widetilde P_{\Prod}\left(i, u, \cdot \right), u \in \widetilde A_\Prod(i)$.   
%When the transition probabilities are not available, 
We denote the set of all action sequences from state $i$ as $E(i)$.  Given $e\in E(i)$, we denote $\widetilde P^{\N}_{\Prod}\left(i, e, j \right)=1$ if there is a positive probability of reaching $j$ from $i$ with the action sequence $e$.  This can also be recursively obtained given $\widetilde P^{\N}_{\Prod}\left(i, u, \cdot \right)$.  
 
%For each action sequence $e$ with length $t$, consider all the possible states after the first $t-1$ actions are
%applied, we have the following equation:
%\begin{equation}
%	\begin{split}
%		P\left(i, u^1_m u^2_m \dots
%		u^t_m, j
%		\right) &= \\
%		\sum_{s_k \in
%		S} P \left(i, u^1_m u^2_m \dots u^{t-1}_m, s_k \right) P
%		\left( s_k ,
%		u^t_m, j \right)
%		\label{ctrlTranProb}
%	\end{split}
%\end{equation}
%We can compute $P\left(i, e, j \right) $ by applying Eq. \eqref{ctrlTranProb} recursively. 

For each pair of states $i,j\in \widetilde S_\Prod$, we define $d(i,j)$ 
as the minimum number of steps from $i$ to reach $j$ (this again can be obtained quickly from $\widetilde P^{\N}_{\Prod}$ without transition probabilities).  
We denote $j\in N(i)$ if and only if $d(i,j) \leq r_N $, where $r_N$
is a fixed integer given apriori.  If $j\in N(i)$, then we say $i$ is 
in the neighborhood of $j$, and $r_{N}$ represents the radius of the neighborhood 
around each state. 

For each state $i\in \widetilde S_\Prod$, We define the safety score $\mathtt{safe}(i)$ as the ratio of 
the neighboring states not in $\bar S^{\star}_{\Prod}$ over all neighboring states of $i$.   Recall that $\bar S^{\star}_{\Prod}$ is the set of states with $0$ probability of reaching the goal states $S^{\star}_{\Prod}$.  To be more specific, we define:
\begin{equation}
	\label{eq:safety}
	\mathtt{safe}(i) := \frac{\sum_{j
	\in N(i)}I(j)}{|N(i)|} ,
\end{equation}
where $I(i)$ is an indicator function such that $I(i)=1$ if and only if $i\in \widetilde S_\Prod\setminus \bar S^\star_\Prod$ and $I(i)=0$ if otherwise.
A higher safety score for the current state implies that it is less likely to reach $\bar S^{\star}_{\Prod}$ in the near future.  Furthermore, we define the progress score of a state $i\in \widetilde S_\Prod$ as 
$\mathtt{progress}(i):=\min_{j\in S^\star_\Prod}d(i, j)$, which is the minimum number 
of transitions from $i$ to any goal state.  

We can now present the definition of our RSP.  Let $\btheta:=[\theta_{1},\theta_{2}]^{\mathtt{T}}$.  We define:
\bea
\label{eq:rspeq}
	&&a \left( \btheta, i,e\right)\n &=& \mathtt{exp}\Big(\theta_1 \sum_{j\in N(i)} \mathtt{safe}(j) \widetilde P^{\N}_{\Prod} \left(i, e , j \right)\n &&  +\theta_2 \sum_{j\in N(i)}\left( \mathtt{progress} \left( j \right) - \mathtt{progress}\left(i \right) \right) \n&&\widetilde P^{\N}_{\Prod}  \left( i, e , j \right)\Big),
\eea
where $\mathtt{exp}$ is the exponential function.  Note that $a(\btheta, i, e)$ is the combination of the expected safety score of the next state applying the action sequence $e$, and the expected improved progress score from the current state applying $e$, weighted by $\theta_{1}$ and $\theta_{2}$.   We assign the probability of pick the action sequence $e$ at $i$ proportional to the combined score $a(\btheta,i,e)$.  Hence, the probability to pick action sequence $e$ at state $i$ is defined as:
\be
\label{eq:probaseq}
\tilde\mu_{\btheta} \left(i, e \right) = \frac{ a \left( \btheta, i, e\right)}{ \sum_{e\in E(i)} a \left( \btheta, i, e\right)}.
\ee

Note that, if the action sequence $e=u_{1}u_{2}\ldots u_{t}$ is picked, only the first action $u_{1}$ is applied.  Hence, at stat $i$, the probability that an action $u\in \widetilde A_{\Prod}(i)$ can be derived from Eq. \eqref{eq:probaseq}:
\begin{equation}
\mu_{\btheta} \left( i, u \right) = \sum_{\{e\in E(i) \st e=uu_{2}\ldots u_{t}\}} \tilde\mu_{\btheta}(i,e),
\end{equation}
which completes the definition of the RSP.

\subsection{Overall Algorithm}
\label{sec:sec:overallAlg}
We now connect all the pieces together and present the overall algorithm giving a solution to Prob. \ref{prob:mainprob}. 
\begin{algorithm}
\small
\caption{Overall algorithm providing a solution to Prob. \ref{prob:mainprob}}
\begin{algorithmic}[1]
\label{alg:overall}
\REQUIRE A labeled NTS $\mathcal M^{\N}=(Q, q_{0}, U, A, P^{\N}, \Pi, h)$ modeling a robot in a partitioned environment,  LTL formula $\phi$ over $\Pi$, and a simulator to compute $P(q,u,\cdot)$ given a state-action pair $(q,u)$
\STATE Translate the LTL formula $\phi$ to a DRA $\mathcal R_{\phi}$
\STATE Generate the product NTS $\Prod^{\N}=\M^{\N}\times \R_{\phi}$
\STATE Find the union of all AMECs $S^{\star}_{\Prod}$ associated with $\Prod^{\N}$
\STATE Convert from an MRP to an SSP and generate $\widetilde\Prod^{\N}$
\STATE Obtained the RSP $\mu_{\btheta}$ with $\Prod^{\N}$
\STATE Execute Alg. \ref{alg:A-C} with $\widetilde\Prod^{\N}$ and $\mu_{\btheta}$ as inputs until $||\nabla\bar{\alpha}(\btheta^{\star})||\leq \epsilon$ for a $\btheta^{\star}$ and a given $\epsilon$
\ENSURE RSP $\mu_{\theta}$ and $\theta^{\star}$ locally maximizing the probability of satisfying $\phi$ with respect to $\btheta$ up to a threshold $\epsilon$
\end{algorithmic}
\end{algorithm}
\begin{proposition}
 Alg. \ref{alg:overall} returns in finite time with $\btheta^{\star}$ locally maximizing the probability of the RSP $\mu_{\btheta}$ satisfying the LTL formula $\phi$.
\end{proposition}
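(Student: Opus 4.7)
The plan is to chain three claims: (a) each preprocessing step of Alg.~\ref{alg:overall} (lines 1--5) terminates in finite time; (b) the LSTD actor--critic loop in line 6 reaches its stopping test $\|\nabla\bar\alpha(\btheta_k)\|\leq\epsilon$ after finitely many iterations; and (c) a local minimizer of the SSP cost $\bar\alpha(\btheta)$ is a local maximizer of the probability that a path of $\M$ under the induced RSP satisfies $\phi$.

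For (a) I would argue each step in turn: translation of $\phi$ to $\R_\phi$ terminates by \cite{gradel2002automata}; the product $\Prod^{\N}=\M^{\N}\times\R_\phi$ is finite with $|S_\Prod|=|Q|\cdot|S|$; the union of AMECs $S^\star_\Prod$ is computed in polynomial time using only non-deterministic information (as observed right after Def.~\ref{def:product}) by the algorithm of \cite{baier2008principles}; the SSP conversion of Def.~\ref{def:conversion} is a purely combinatorial rewrite on $\widetilde S_\Prod$; and the RSP $\mu_\btheta$ is given by the closed-form expressions \eqref{eq:rspeq}--\eqref{eq:probaseq}, each of whose ingredients (neighborhoods $N(\cdot)$, $\mathtt{safe}$, $\mathtt{progress}$, and reachability under $\widetilde P^{\N}_\Prod$) is computable without transition probabilities in finite time.

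For (b) I would invoke the convergence analysis of the LSTD actor--critic iteration developed in \cite{estanjini2011least}: under the stated stepsize conditions on $\{\gamma_k\}$, $\{\beta_k\}$, $\Gamma(\br_k)$, and for the score function $\bpsi_\btheta$ defined in \eqref{eq:psitheta}, the iterates of Alg.~\ref{alg:A-C} satisfy $\liminf_k\|\nabla\bar\alpha(\btheta_k)\|=0$ almost surely, provided $\bar\alpha(\btheta)$ is finite under the RSP in use. Finiteness is ensured by the restart construction in Def.~\ref{def:conversion}: transitions out of $\bar S^\star_\Prod$ reset to $\tilde s_{\Prod 0}$, and since $\mu_\btheta$ in \eqref{eq:probaseq} assigns strictly positive probability to every enabled action, the terminal $s^\star_\Prod$ is reached in finite expected time. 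It follows that for any $\epsilon>0$ the stopping test of Alg.~\ref{alg:A-C} triggers in finitely many iterations. Then for (c), the MRP-to-SSP equivalence established in \cite{estanjini2011least} (summarized immediately after Def.~\ref{def:conversion}) shows that $\bar\alpha(\btheta)$ is a strictly monotone function, pointwise in $\btheta$, of the probability of reaching $S^\star_\Prod$ from $s_{\Prod 0}$ under $\mu_\btheta$; by the standard probabilistic-model-checking reduction recalled in Sec.~\ref{sec:sec:genMRP} (\cite{baier2008principles}), this in turn equals the probability that the induced path on $\M$ satisfies $\phi$. Monotone correspondence preserves local optima, so the $\btheta^\star$ returned in line 6 locally maximizes $\Pr(\mu_\btheta\vDash\phi)$.

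\textbf{Main obstacle.} The delicate point is step (b): one must verify that the RSP family defined in Sec.~\ref{sec:sec:RSPdesign} satisfies the technical hypotheses of the convergence theorem of \cite{estanjini2011least}, notably finiteness of $\bar\alpha(\btheta)$ for every $\btheta\in\reals^2$ and appropriate boundedness/Lipschitz properties of $\bpsi_\btheta$ through the exponential parameterization \eqref{eq:rspeq}. Finiteness follows from the restart construction as noted; boundedness of $\bpsi_\btheta$ follows because $\mu_\btheta$ is a softmax over the finite set $E(i)$, so $\nabla_\btheta\ln\mu_\btheta$ is a bounded affine combination of the feature vectors appearing in \eqref{eq:rspeq}. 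These verifications, rather than the dynamic-programming equivalence in (c), are what carry the technical weight of the argument.
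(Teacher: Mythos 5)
Your proposal follows essentially the same route as the paper's own proof: cite the finite-time convergence of the LSTD actor--critic iteration from \cite{estanjini2011least}, and combine it with the reduction chain (LTL $\to$ DRA $\to$ product $\to$ AMECs $\to$ MRP $\to$ SSP) to transfer local optimality of $\btheta^\star$ for $\bar\alpha(\btheta)$ back to local optimality of the satisfaction probability. Your version is in fact more careful than the paper's three-sentence argument, since you explicitly flag and discharge the hypotheses (finiteness of $\bar\alpha(\btheta)$ via the restart construction, boundedness of $\bpsi_\btheta$ from the softmax form of \eqref{eq:probaseq}) that the paper leaves implicit in its citation of \cite{estanjini2011least}.
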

\begin{proof}
 In \cite{estanjini2011least}, we have shown that the actor-critic algorithm used in this paper returns in finite time with a locally optimal $\btheta^{\star}$ such that $||\nabla\bar{\alpha}(\btheta^{\star})||\leq \epsilon$ for a given $\epsilon$.  We have shown throughout the paper that the optimal policy maximizing the probability of reaching $S^{\star}_{\Prod}$ on $\Prod$ is a policy maximizing the probability of satisfying $\phi$.   We also showed throughout the paper that the SSP problem, as well as the RSP $\mu_{\btheta}$ can be constructed without the transition probabilities, and only with $\M^{\N}$.  Therefore, Alg. \ref{alg:overall} produces an RSP maximizing the probability of satisfying $\phi$ with respect to $\btheta$ up to a threshold $\epsilon$.
\end{proof}

\section{Hardware-in-the-loop simulation}
\label{sec:Hardware}
We test the algorithms proposed in this paper through hardware-in-the-loop simulation for the RIDE environment (as shown in Fig. \ref{fig:robotandsmallenv}).  The transition probabilities are computed by an accurate simulator of RIDE as needed.
 %a robot from an LTL specification once in a small and once in a large environment utilizing the RIDE Simulator (shown in Fig. \ref{fig:snap_sim}).  
We apply both LTL control synthesis methods of linear programming (exact solution) and actor-critic (approximate solution) and compare the results.  %The comparison of the results illustrated that the approximate method performs nearly optimally.  %For the control synthesis in the large environment, we obtained a solution through Actor-Critic method in only a few minutes, where the larger problem requires hours to complete.  The simulation tool, construction of the MDP, and the case-studies are discussed in detail below.

\subsection{Environment}
%In this section, we apply the methods described above to the provably-correct deployment of a mobile robot with noisy sensors and actuators using our RIDE Simulator.

In this case study, we consider an environment whose topology is shown in Fig. \ref{fig:SmallEnv}.  This environment is made of square blocks forming 164 corridors and 84 intersections.  The corridors ($C_1, C_2, \dots, C_{164}$) shown as white regions in Fig. \ref{fig:SmallEnv} are of three different lengths, one-, two-, and three-unit lengths.  The three-unit corridors are used to build corners in the environment.  The intersections ($I_1, I_2,\ldots,I_{84}$) are of two types, three-way and four-way, and are shown as grey blocks in Fig. \ref{fig:SmallEnv}.  The black regions in this figure represent the walls of the environment.  Note that there is always a corridor between two intersections.  

\begin{figure}
   \center
\includegraphics[width=82mm]{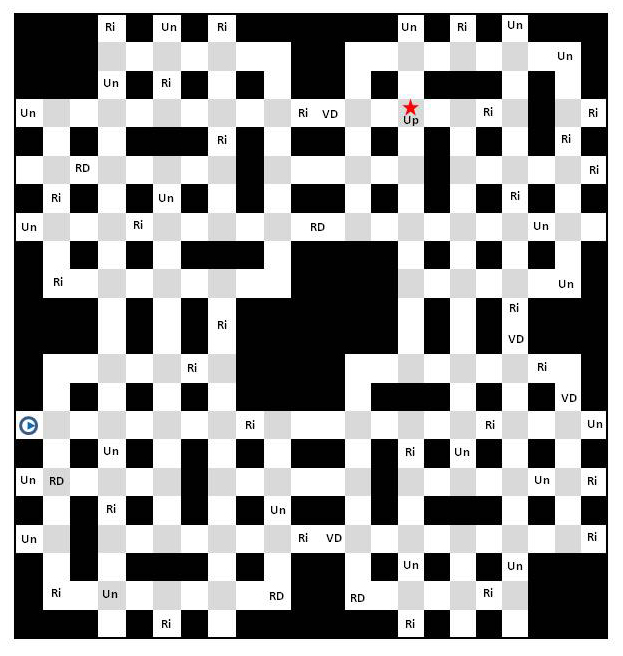}
\caption{Schematic representation of the environment with 84 intersections and 164 corridors.  The black blocks represent walls, and the grey and white regions are intersection and corridors, respectively.  There are five properties of interest in the regions indicated with \textbf{VD} = \emph{ValuableData}, \textbf{RD} = \emph{RegularData}, \textbf{Up} = \emph{Upload}, \textbf{Ri} = \emph{Risky}, and \textbf{Un} = \emph{Unsafe}.  The initial position of the robot is shown with a blue disk and the upload region is indicated with a red star.}\label{fig:SmallEnv}
\end{figure}

%We construct the larger environment using the smaller environment as the building block.  The purpose of the larger environment is to show the effectiveness of our approach in cases where exact optimal solution takes long time to solve.  We connect 16 of the smaller environments in a four by four square block which resulted in 3,824 regions (2,480 corridors and 1,344 intersections).

There are five properties of interest (observations) associated with the regions of the environment.  These properties are: \textbf{VD} = \emph{ValuableData} (regions containing valuable data to be collected), \textbf{RD} = \emph{RegularData} (regions containing regular data to be collected), \textbf{Up} = \emph{Upload} (regions where data can be uploaded), \textbf{Ri} = \emph{Risky} (regions that could pose a threat to the robot), and \textbf{Un} = \emph{Unsafe} (regions that are unsafe for the robot).

\subsection{Construction of the MDP model}
\label{sec:cons_MDP}

The robot is equipped with a set of feedback control primitives (actions) - $\mathtt{FollowRoad}$, $\mathtt{GoRight}$, $\mathtt{GoLeft}$, and $\mathtt{GoStraight}$.  The controller $\mathtt{FollowRoad}$ is only available (enabled) at the corridors. At four-way intersections, controllers are $\mathtt{GoRight}$, $\mathtt{GoLeft}$, and $\mathtt{GoStraight}$.   At three-way intersections, depending on the shape of the intersection, two of the four controllers are available.  Due to the presence of noise in the actuators and sensors, however, the resulting motion may be different than intended.  Thus, the outcome of each control primitive is characterized probabilistically.  %Assuming that the robot can determine exactly its current region in the environment, then the actions and the transition probabilities yield an MDP model of the motion of the robot in the environment. (see \cite{LaWaAnBe-ICRA10} for a detailed discussion on creating this models).

To create an MDP model of the robot in RIDE, we define each state of the MDP as a collection of two adjacent regions (a corridor and an intersection).  For instance the pairs $C_1$-$I_2$ and $I_3$-$C_4$ are two states of the MDP.  Through this pairing of regions, it was shown that the Markov property (\ie the result of an action at a state depends only on the current state) can be achieved \cite{LaWaAnBe-ICRA10}. The resulting MDP has 608 states.

The set of actions available at a state is the set of controllers available at the last region corresponding to the state. For example, when in state $C_1$-$I_2$ only those actions from region $I_2$ are allowed.  %The set of properties of the MDP was defined to be $\Pi = \{\textbf{VD}, \textbf{RD}, \textbf{Up}, \textbf{Ri}, \textbf{Un}\}$. 
Each state of the MDP whose second region satisfies an observation in $\Pi$ is mapped to that observation.  %Thus, we see that the NTS $\M^{\N}$ can be rapidly constructed from robot in the environment, if transition probabilities are not s

To obtain transition probabilities, we use an accurate simulator (see Fig. \ref{fig:snap_sim}) incorporating the motion and sensing of an iRobot Create platform with a Hokoyu URG-04LX laser range finder, APSX RW-210 RFID reader, and an MSI Wind U100-420US netbook (the robot is shown in Fig. \ref{fig:robotandsmallenv}) in RIDE.  Specifically, it emulates experimentally measured response times, sensing and control errors, and noise levels and distributions in the laser scanner readings.  More detail for the software implementation of the simulator can be found in \cite{LaWaAnBe-ICRA10}.
We perform a total of 1,000 simulations for each action available in each MDP state. %In each trial, the robot is initialized at the beginning of the first region of each state. If this region was a corridor, then the $\mathtt{FollowRoad}$ controller was applied until the system transitioned to the second region of the state. If the first region was an intersection then the controller most likely to transition the robot to the second region was applied. Once the second region was reached, one of the allowed actions was applied and the resulting transition was recorded. The results were then compiled into the transition probabilities.

\begin{figure}
   \center
   \includegraphics[width=65mm]{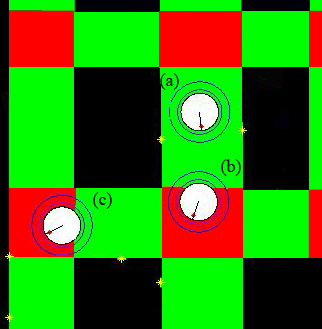} \\
   \caption{Simulation snapshots. The white disk represents the robot and the different circles around it indicate different "zones" in which different controllers are activated. The yellow dots represent the laser readings used to define the target angle. (a) The robot centers itself on a stretch of corridor by using $\mathtt{FollowRoad}$; (b) The robot applies $\mathtt{GoRight}$ in an intersection; (c) The robot applies $\mathtt{GoLeft}$.}
   \label{fig:snap_sim}
\end{figure}

\subsection{Task specification and results}
We consider the following mission task:

\noindent {\bf Specification:} Reach a location with \emph{ValuableData} (\textbf{VD}) or \emph{RegularData} (\textbf{RD}), and then reach \emph{Upload} (\textbf{Up}).  Do not reach \emph{Risky} (\textbf{Ri}) regions unless eventually reach a location with \emph{ValuableData} (\textbf{VD}).  Always avoid \emph{Unsafe} (\textbf{Un}) regions until \emph{Upload} (\textbf{Up}) is reached (and mission completed). 

The above task specification can be translated to the LTL formula:
\bea
\label{eq:LTLformulaex}
 \phi&:=&\Event\textbf{Up} \Land (\neg \textbf{Un} \Until \textbf{Up})\Land \Always(\textbf{Ri}\longrightarrow \Event \textbf{VD})  \n
&&\Land \Always (\textbf{VD} \vee \textbf{RD}\longrightarrow \Next \Event \textbf{Up})
\eea

%The first line of $\phi$, $\Always \Event \textbf{VD}$, enforces that a region with observation $\textbf{VD}$ is repeatedly visited to upload data and to avoid unsafe regions.   The second line ensures that after $\textbf{VD}$ is reached, the robot is driven to a state with observation either \textbf{RD} (regular data) or \textbf{VD} (valuable data), before going back to upload. 
\begin{figure}[t]
\begin{center}
\includegraphics[scale=.6]{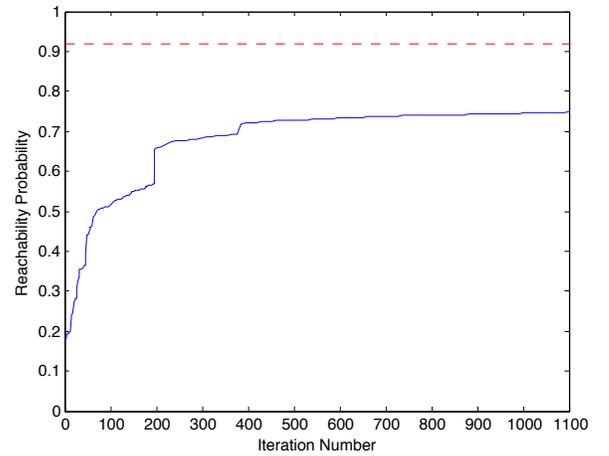}
\end{center}
\caption{The optimal solution (the maximal probability of satisfying the specification) is shown with the dashed line,  and the solid line represents the exact reachability probability for the RSP as a function of the number of iterations applying the proposed algorithm.}
 \label{small2}
\end{figure}

The initial position of the robot is shown as a blue circle in Fig. \ref{fig:SmallEnv} with the orientation towards the neighboring intersection.  We used the computational frameworks described in this paper to find the control strategy maximizing the probabilities of satisfying the specification. The size of the DRA is 17 which results in the product MDP with $10336$ states.  By applying both methods of linear programming (exact solution) and actor-critic (approximate solution), we found the maximum probabilities of satisfying the specification were $92\%$ and $75\%$, respectively.  % It took approximately 10 minutes on a desktop computer with Intel Core-2 Duo processor to find the solution to the MRP problem corresponding to the larger environment.  
The graph of the convergence of the actor-critic solution is shown in Fig. \ref{small2}.  The parameters for this examples are: $\lambda=0.9$,  and the initial $\btheta=[5, -0.5]^{\mathtt{T}}$.  The look-ahead window $t$ for the RSP is $2$.  

It should be emphasized that, we only compute the transition probabilities along the sample path.  Thus, when Alg. \ref{alg:overall} is completed (at iteration $1100$), at most $1100$ transition probabilities of state-action pairs were computed.  In comparison, in order to solve the probability exactly, arround $30000$ transition probabilities of state-action pairs must be computed.

\section{Conclusions}
\label{sec:conclusions}
We presented a framework that brings together an approximate dynamic programming computational method of the actor critic type, with formal control synthesis for Markov Decision Processes (MDPs) from temporal logic specifications.  We show that this approach is particular suitable for problems where the transition probabilities of the MDP are difficult or computationally expensive to compute, such as for many robotic applications.  We show that this approach effectively finds an approximate optimal policy within a class of randomized stationary polices maximizing the probability of satisfying the temporal logic formula.   Future direction includes extending this result to multi-robot teams, examining exactly how to choose an appropriate look-ahead window when designing the RSP, and applying the result to more realistic problem settings with the MDP containing possibility millions of states.

%Future work include testing this framework in larger environments, and for robots deploying in the field 

\bibliographystyle{IEEEtran}

% Generated by IEEEtran.bst, version: 1.13 (2008/09/30)
\begin{thebibliography}{10}
\providecommand{\url}[1]{#1}
\csname url@samestyle\endcsname
\providecommand{\newblock}{\relax}
\providecommand{\bibinfo}[2]{#2}
\providecommand{\BIBentrySTDinterwordspacing}{\spaceskip=0pt\relax}
\providecommand{\BIBentryALTinterwordstretchfactor}{4}
\providecommand{\BIBentryALTinterwordspacing}{\spaceskip=\fontdimen2\font plus
\BIBentryALTinterwordstretchfactor\fontdimen3\font minus
  \fontdimen4\font\relax}
\providecommand{\BIBforeignlanguage}[2]{{%
\expandafter\ifx\csname l@#1\endcsname\relax
\typeout{** WARNING: IEEEtran.bst: No hyphenation pattern has been}%
\typeout{** loaded for the language `#1'. Using the pattern for}%
\typeout{** the default language instead.}%
\else
\language=\csname l@#1\endcsname
\fi
#2}}
\providecommand{\BIBdecl}{\relax}
\BIBdecl

\bibitem{Hadas-ICRA07}
H.~Kress-Gazit, G.~Fainekos, and G.~J. Pappas, ``Where's {W}aldo?
  {S}ensor-based temporal logic motion planning,'' in \emph{{IEEE} Int. Conf.
  on Robotics and Automation}, Rome, Italy, 2007, pp. 3116--3121.

\bibitem{Karaman_mu_09}
S.~Karaman and E.~Frazzoli, ``Sampling-based motion planning with deterministic
  $\mu$-calculus specifications,'' in \emph{{IEEE} Conf. on Decision and
  Control}, Shanghai, China, 2009, pp. 2222 -- 2229.

\bibitem{Loizou04}
S.~G. Loizou and K.~J. Kyriakopoulos, ``Automatic synthesis of multiagent
  motion tasks based on {LTL} specifications,'' in \emph{43rd IEEE Conference
  on Decision and Control}, December 2004.

\bibitem{Quottrup04}
M.~M. Quottrup, T.~Bak, and R.~Izadi-Zamanabadi, ``Multi-robot motion planning:
  A timed automata approach,'' in \emph{{IEEE} Int. Conf. on Robotics and
  Automation}, New Orleans, LA, Apr. 2004, pp. 4417--4422.

\bibitem{Tok-Ufuk-Murray-CDC09}
T.~Wongpiromsarn, U.~Topcu, and R.~M. Murray, ``Receding horizon temporal logic
  planning for dynamical systems,'' in \emph{IEEE Conference on Decision and
  Control}, Shanghai, China, 2009.

\bibitem{bhatia2010sampling}
A.~Bhatia, L.~Kavraki, and M.~Vardi, ``Sampling-based motion planning with
  temporal goals,'' in \emph{Robotics and Automation (ICRA), 2010 IEEE
  International Conference on}.\hskip 1em plus 0.5em minus 0.4em\relax IEEE,
  2010, pp. 2689--2696.

\bibitem{Clarke99}
E.~M.~M. Clarke, D.~Peled, and O.~Grumberg, \emph{Model checking}.\hskip 1em
  plus 0.5em minus 0.4em\relax MIT Press, 1999.

\bibitem{baier2008principles}
C.~Baier, J.-P. Katoen, and K.~G. Larsen, \emph{Principles of Model
  Checking}.\hskip 1em plus 0.5em minus 0.4em\relax MIT Press, 2008.

\bibitem{de1997formal}
L.~De~Alfaro, ``Formal verification of probabilistic systems,'' Ph.D.
  dissertation, Stanford University, 1997.

\bibitem{vardi1999probabilistic}
M.~Vardi, ``Probabilistic linear-time model checking: An overview of the
  automata-theoretic approach,'' \emph{Formal Methods for Real-Time and
  Probabilistic Systems}, pp. 265--276, 1999.

\bibitem{courcoubetis1998markov}
C.~Courcoubetis and M.~Yannakakis, ``Markov decision processes and regular
  events,'' \emph{IEEE Transactions on Automatic Control}, vol.~43, no.~10, pp.
  1399--1418, 1998.

\bibitem{baier2004controller}
C.~Baier, M.~Gr{\"o}{\ss}er, M.~Leucker, B.~Bollig, and F.~Ciesinski,
  ``Controller synthesis for probabilistic systems,'' in \emph{In Proceedings
  of IFIP TCS2004}.\hskip 1em plus 0.5em minus 0.4em\relax Citeseer, 2004.

\bibitem{IFAC2011_LTL}
X.~C. Ding, S.~L. Smith, C.~Belta, and D.~Rus, ``{LTL} planning for groups of
  robots control in uncertain environments with probabilistic satisfaction
  guarantees,'' in \emph{18th IFAC World Congress}, 2011.

\bibitem{LaWaAnBe-ICRA10}
M.~Lahijanian, J.~Wasniewski, S.~B. Andersson, and C.~Belta, ``Motion planning
  and control from temporal logic specifications with probabilistic
  satisfaction guarantees,'' in \emph{{IEEE} Int. Conf. on Robotics and
  Automation}, Anchorage, AK, 2010, pp. 3227 -- 3232.

\bibitem{si2004handbook}
J.~Si, \emph{Handbook of learning and approximate dynamic programming}.\hskip
  1em plus 0.5em minus 0.4em\relax Wiley-IEEE Press, 2004, vol.~2.

\bibitem{barto1983neuronlike}
A.~Barto, R.~Sutton, and C.~Anderson, ``Neuronlike adaptive elements that can
  solve difficult learning control problems.'' \emph{IEEE Transactions on
  Systems, Man, \&amp; Cybernetics}, 1983.

\bibitem{paschalidis2009actor}
I.~Paschalidis, K.~Li, and R.~Estanjini, ``An actor-critic method using least
  squares temporal difference learning,'' in \emph{Decision and Control, 2009
  held jointly with the 2009 28th Chinese Control Conference. CDC/CCC 2009.
  Proceedings of the 48th IEEE Conference on}.\hskip 1em plus 0.5em minus
  0.4em\relax IEEE, pp. 2564--2569.

\bibitem{konda2004actor}
V.~Konda and J.~Tsitsiklis, ``On actor-critic algorithms,'' \emph{SIAM Journal
  on Control and Optimization}, vol.~42, no.~4, pp. 1143--1166, 2004.

\bibitem{estanjini2011least}
R.~Estanjini, X.~Ding, M.~Lahijanian, J.~Wang, C.~Belta, and I.~Paschalidis,
  ``Least squares temporal difference actor-critic methods with applications to
  robot motion control,'' in \emph{IEEE Conference on Decision and Control
  (CDC), Orlando, FL}, December 2011.

\bibitem{RIDE-BU}
\BIBentryALTinterwordspacing
``Robotic indoor environment.'' [Online]. Available:
  \url{www.hyness.bu.edu/ride}
\BIBentrySTDinterwordspacing

\bibitem{gradel2002automata}
E.~Gradel, W.~Thomas, and T.~Wilke, \emph{Automata, logics, and infinite games:
  {A} guide to current research}, ser. Lecture Notes in Computer Science.\hskip
  1em plus 0.5em minus 0.4em\relax Springer, 2002, vol. 2500.

\bibitem{puterman1994markov}
M.~Puterman, \emph{Markov decision processes: Discrete stochastic dynamic
  programming}.\hskip 1em plus 0.5em minus 0.4em\relax John Wiley \& Sons, Inc.
  New York, NY, USA, 1994.

\end{thebibliography}

\end{document}